\documentclass[runningheads]{llncs}
\usepackage{graphicx}

\usepackage{url}
\usepackage{hyperref}
\usepackage{amsmath}
\usepackage{booktabs}
\usepackage{xspace}
\usepackage{pifont}
\usepackage{threeparttable}
\usepackage{tikz}
\usepackage{graphicx} 
\usepackage{pgfplots}
\usepackage{multirow}
\RequirePackage{algorithm}
\RequirePackage{algorithmic}
\usepackage{enumitem}
\usepackage{wrapfig}

\newcommand{\bert}{\textsc{BERT}\xspace}
\newcommand{\gnn}{\textsc{GNN}\xspace}
\newcommand{\mlp}{\textsc{MLP}\xspace}
\newcommand\gradbertemoji{\raisebox{-2pt}{\includegraphics[width=0.9em]{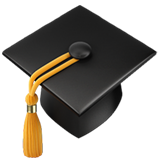}}}
\newcommand{\method}{\textsc{GraDBERT}\xspace}
\newcommand{\grad}{\textsc{GraD}\xspace}

\newcommand{\gradbert}{\textsc{GraD}BERT\xspace}
\newcommand{\gradmlp}{\textsc{GraD}MLP\xspace}

\newcommand{\gradj}{\textsc{GraD}-Joint\xspace}
\newcommand{\grada}{\textsc{GraD}-Alt\xspace}
\newcommand{\gradkd}{\textsc{GraD}-JKD\xspace}
\newcommand{\ngradj}{Joint\xspace}
\newcommand{\ngrada}{Alt\xspace}
\newcommand{\ngradkd}{JKD\xspace}

\newcommand{\graphless}{graph-free\xspace}

\newcommand{\myTag}[1]{ {\bf #1}}

\usepackage{amsmath,amsfonts,bm}

\def\Figref#1{Figure~\ref{#1}}

\def\Secref#1{Section~\ref{#1}}

\def\eqref#1{~(\ref{#1})}
\def\Eqref#1{Eq.(\ref{#1})}

\def\1{\bm{1}}

\def\vh{{\bm{h}}}

\def\vs{{\bm{s}}}
\def\vt{{\bm{t}}}

\def\vx{{\bm{x}}}
\def\vy{{\bm{y}}}

\def\mW{{\bm{W}}}

\def\mY{{\bm{Y}}}

\DeclareMathAlphabet{\mathsfit}{\encodingdefault}{\sfdefault}{m}{sl}
\SetMathAlphabet{\mathsfit}{bold}{\encodingdefault}{\sfdefault}{bx}{n}

\def\gE{{\mathcal{E}}}

\def\gG{{\mathcal{G}}}

\def\gL{{\mathcal{L}}}

\def\gN{{\mathcal{N}}}
\def\gO{{\mathcal{O}}}

\def\gV{{\mathcal{V}}}

\def\gX{{\mathcal{X}}}

\def\sR{{\mathbb{R}}}

\DeclareMathOperator*{\argmax}{arg\,max}

\usepackage[capitalize,noabbrev]{cleveref}

\begin{document}
\title{Train Your Own GNN Teacher: Graph-Aware Distillation on Textual Graphs}
\titlerunning{Graph-Aware Distillation on Textual Graphs}
\author{Costas Mavromatis\inst{1}\thanks{Work done while interning at Amazon Web Services, Santa Clara. Correspondence to: \textit{mavro016@umn.edu}} \and
Vassilis N. Ioannidis\inst{2} \and
Shen Wang\inst{2} \and Da Zheng\inst{2} \and Soji Adeshina\inst{2} \and Jun Ma\inst{2} \and Han Zhao\inst{2,3} \and Christos Faloutsos\inst{2,4} \and George Karypis\inst{1,2}}
\authorrunning{C. Mavromatis et al.}
\institute{University of Minnesota \and Amazon Web Services \and University of Illinois at Urbana-Champaign \and Carnegie Mellon University}
\maketitle              %
\begin{abstract}
How can we learn effective node representations on textual graphs? Graph Neural Networks (GNNs) that use Language Models (LMs) to encode textual information of graphs achieve state-of-the-art performance in many node classification tasks. Yet, combining GNNs with LMs has not been widely explored for practical deployments due to its scalability issues. In this work, we tackle this challenge by developing a Graph-Aware Distillation framework (\grad) to encode graph structures into an LM for \graphless, fast inference. Different from conventional knowledge distillation, \grad jointly optimizes a GNN teacher and a \graphless student over the graph's nodes via a shared LM. This encourages the \graphless student to exploit graph information encoded by the GNN teacher while at the same time, enables the GNN teacher to better leverage textual information from unlabeled nodes. As a result, the teacher and the student models learn from each other to improve their overall performance. Experiments in eight node classification benchmarks in both transductive and inductive settings showcase \grad's superiority over existing distillation approaches for textual graphs. Our code is available at: \url{https://github.com/cmavro/GRAD}.

\keywords{Graph Neural Networks  \and Language Models \and Knowledge Distillation.}
\end{abstract}
\section{Introduction} \label{sec:intro}

Graph Neural Networks (GNNs) offer state-of-the-art performance on graph learning tasks in real world applications, including social networks, recommendation systems, biological networks and drug interactions. GNNs~\cite{kipf2017gcn,hamilton2017graphsage,velivckovic2017gat} learn node representations via a recursive neighborhood aggregation scheme~\cite{gilmer2017neural}, which takes as input the node features and the graph structure. In textual graphs, text-based information is associated with the graph's nodes. Methods such as  bag-of-words, word2vec~\cite{mikolov2013distributed} or pre-trained Language Models (LMs), e.g., BERT~\cite{devlin2019bert}, are used to transform raw texts of nodes into features. Transforming raw text to numerical features is usually associated with a non-negligible cost. For example, LMs that rely on transformers~\cite{vaswani2017attention} have a very large number of parameters and their cost depends on the LM's architecture\footnote{For example, the inference cost of a single transformer layer is $\gO(L^2d+Ld^2)$, where $L$ is the sequence length and $d$ is the number of hidden dimensions.}. This high computational cost results in expensive and/or slow inference during deployment.

Recent works show that combining GNNs with LMs in an end-to-end manner~\cite{ioannidis2022graph-aware-bert,yang2021graphformers} leads to state-of-the-art performance for node classification and link prediction tasks. Although powerful, these models are associated with expensive inference costs. During mini-batch inference, a $K$-layer GNN  fetches an exponential number of nodes w.r.t. $K$, where raw texts of the fetched nodes need to be transformed to numerical features on the fly.  
As a result, combining LMs with GNNs in a cascaded manner exponentially grows the LM inference cost for each target node. This cost is prohibitive for applications where fast inference is vital, such as providing instant query-product matching in e-commence. 
 
Aiming at a balance between effectiveness and efficiency, we seek to transfer useful knowledge from a GNN teacher to a \graphless student via distillation. As the \graphless student does not use the graph structure during inference, the inference cost depends only on the LM employed. However, existing knowledge distillation (KD) approaches for graphs either do not take full advantage of the graph structure~\cite{zhang2022glnn} or they require powerful student models (e.g., large LMs) to achieve good performance~\cite{zhao2022glem}.

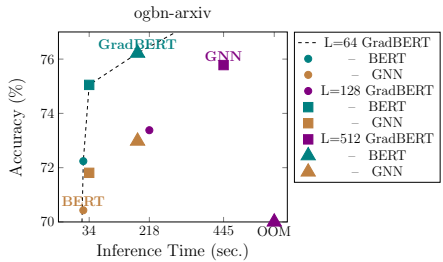
\begin{wrapfigure}{r}{0.5\textwidth}
  \centering
    \resizebox{\linewidth}{!}{\definecolor{col1}{rgb}{0.60, 0.31, 0.64}
\definecolor{col2}{rgb}{0.30, 0.69, 0.29}
\definecolor{col3}{rgb}{0.22, 0.49, 0.72}
\definecolor{col4}{rgb}{0.89, 0.10, 0.11}
\definecolor{col5}{rgb}{1, 1, 0.8}

\begin{tikzpicture}

\tikzstyle{every node}=[font=\large]
\begin{axis}[enlargelimits=false, ylabel={\Large Accuracy (\%)} , xlabel={\Large Inference Time (sec.)}, ymin=70, ymax=77,  legend pos=outer north east, title={\Large ogbn-arxiv}, xtick={34, 218, 445, 600}, xticklabels={34, 218, 445, OOM}, xmin=-60, xmax=640]

\addplot[draw,dashed,thin] 
    coordinates {(12,70) (16, 72.24) (34, 75.05) (182, 76.22) (300,77)};  
    \addplot[
        scatter/classes={
        a={mark=*,mark size = 3pt,teal}, 
        b={mark=*,mark size = 3pt,brown}, 
        c={mark=*,mark size = 3pt,violet},
        d={mark=square*,mark size = 4pt,teal},
        e={mark=square*,mark size = 4pt,brown},
        f={mark=square*,mark size = 4pt,violet},
        g={mark=triangle*,mark size = 7pt,teal},
        h={mark=triangle*,mark size = 7pt,brown},
        i={mark=triangle*,mark size = 7pt,violet}},
        scatter, mark=*, only marks, 
        scatter src=explicit symbolic,
        nodes near coords*={\Label},
        mark size=2pt,
        visualization depends on={value \thisrow{label} \as \Label} %
    ] table [meta=class] {
        x y class label
        16 72.24 a \;
        16 70.43 b {\color{brown}\textbf{BERT}}
        218 73.38 c \;
        34 75.05 d \;
        34 71.81 e \;
        445 75.78 f {\color{violet}\textbf{GNN}}
        182 76.22 g {\color{teal}\textbf{GradBERT}}
        182 72.98 h \;
        600 70 i \;
    };
    
    \legend{L=64 GradBERT, -- \; BERT , -- \; GNN ,
    L=128 GradBERT, -- \; BERT , -- \; GNN ,
    L=512 GradBERT, -- \; BERT , -- \; GNN }

\end{axis}

\end{tikzpicture}}
  \caption{ \myTag{\gradbert} is on the Pareto front: Accuracy performance w.r.t. inference time for ogbn-arxiv. 
    $L$ is the input sequence length and
    OOM (out of memory) means that the model encountered GPU failure. }
    \label{fig:intro}
\end{wrapfigure}

To address these limitations, we developed a Graph-Aware Distillation approach (\grad) that \emph{jointly}
optimizes the GNN teacher with its graph-free student via a shared LM. The shared LM serves as an interaction module that allows the two models to learn from each other. On the one hand, the GNN teacher updates the LM's parameters with graph-aware information and distills graph knowledge as soft-labels, which are provided to the student. On the other hand, the \graphless student imitates the GNN's predictions and leverages textual information from unlabeled nodes to improve the fine-tuning of the shared LM. This dynamic interplay between the two models stimulates the student to not only mimic the GNN predictions but to learn node features that improve its teacher's performance.

\grad is formulated as a multi-task learning for the shared LM, whose goal is to achieve good performance for both the GNN teacher and the \graphless student models. We designed three different strategies for optimizing the \grad framework. Their key differences is on how tight the teacher and student models are coupled and on how much flexibility the student model has to fit to its teacher's predictions. As a result, \grad can be applied in both large-scale graphs, where the student model is a powerful LM, and in small-scale graphs, where the LM is substituted by simple MLPs.

\Figref{fig:intro} illustrates the superiority of \grad when the \graphless student model is a BERT model (\gradbert). It outperforms a fine-tuned BERT model for node classification by 3.24\%, and it is as effective as combining BERT with a GNN with 2.4x-13x smaller inference time.
Our  contributions are summarized below:
\begin{itemize}
    \item We analyze and identify the limitations of conventional knowledge distillation for textual graphs, which have been previously under-studied  (\Secref{sec:mi}).
    
    \item We present a graph-aware distillation (\grad) framework that couples a GNN teacher and a \graphless student together to fully exploit the underlying graph structure. \grad improves  classification for both seen and unseen nodes, that are present in either large or small-scale graphs (\Secref{sec:exp-graphless} and \Secref{sec:exp-ind}). 
    
    \item We developed three different strategies (\Secref{sec:grad}) to effectively optimize \grad framework, which we comprehensively study. This enables \grad to scale to large graphs and achieve state-of-the-art performance in node classification tasks.
\end{itemize}

\section{Background}

\subsection{Problem Formulation} \label{sec:pb}
In an input \emph{textual graph} $\gG = \{ \gV, \gE\}$, each node $v \in \gV$ is associated with raw text, which we denote as $X_v$. $\gV$ is the node set and $\gE$ is the edge set. Let $N$ denote the total number of nodes.
 For node classification, the prediction targets are $ \mY \in \sR^{N \times m}$, where row $\vy_v$ is a $m$-dim one-hot vector for node $v$. The node set is divided into labeled nodes $\gV^L$ and unlabeled nodes  $\gV^U$, i.e., $\gV = \gV^L \cup \gV^U$. In inductive scenarios, the input graph $\gG$ is divided into two subgraphs $\gG= \gG^\text{tran} \cup \gG^\text{ind}$, where $\gG^\text{tran}$ is used for learning (transductive part) and $\gG^\text{ind}$ is used only during inference ($\gG^\text{ind}$ is the inductive part that is not observed during learning). In transductive scenarios, we have $\gG= \gG^\text{tran}$ and $\gG^\text{ind} = \emptyset$.

As introduced \Secref{sec:intro}, utilizing the textual graph $\gG$ for making predictions at test time results in slow inference. Thus, we seek to learn a \graphless model $\tau'$, e.g., a LM, that only takes node text $X_v$ during inference.  It is desired that $\tau'(X_v) \approx f(\gG, X_v)$, where $f$ is a model that uses the graph, e.g., a GNN, so that the \graphless model achieves as effective node classification as a graph-based model.

\subsection{GNNs on Textual Graphs}\label{sec:gnns}

To handle raw texts in textual graphs, $X_v$ is transformed to numerical features $\vx_v \in  \sR^d$ via a function $\tau(\cdot)$,
\begin{equation}
\vx_v = \tau(X_v).
    \label{eq:bert-trans}
\end{equation}
For example, 
LMs, such as BERT~\cite{devlin2019bert} for modelling $\tau(\cdot)$, transform each token of the input sequence $X_v$ to a representation. The final $\vx_v$ can be obtained as the representation of a specific token, which is usually the [CLS] token. 

GNNs~\cite{kipf2017gcn,velivckovic2017gat} transform a computation graph  $\gG_v$ that is centered around node $v$ to a $d$-dimensional node representation $\vh_v$. 
We write the GNN transformation as
\begin{equation}
    \vh_v = \gnn \big( \{\tau(X_v): u \in \gG_v\} \big),
\end{equation} 
where $\tau(X_u)$ generates the input features $\vx_u$ of node $u$.
For a GNN with $K$ layers,  $\gG_v$ includes nodes and edges (with self-loops) up to $K$ hops away from $v$. 

The ($k+1$)-th GNN layer takes as input a node's representation $\vh^{(k)}_v$ of the previous layer $k$ as well as the representations of its 1-hop neighbors. It aggregates them to a new representation $\vh^{(k+1)}_v$ as  follows,
\begin{equation}
    \vh^{(k+1)}_v = \phi \big(  \{\vh^{(k)}_u : u \in \gN_v\}\big), 
    \label{eq:gnn-update}
\end{equation}
where $\gN_v$ is the set of direct neighbors of $v$ and $\phi(\cdot)$ is an aggregation function.
For example, a common GNN update $\phi(\cdot)$, which is employed in GraphSAGE~\cite{hamilton2017graphsage} and RGCN~\cite{schlichtkrull2018rgcn}, can be described as follows,
\begin{equation}
    \vh^{(k+1)}_v = \sigma \Big( \mW^{(k)}_{\text{self}} \vh^{(k)}_v + \sum_{ u \in \gN_v} \mW^{(k)} \vh^{(k)}_u \Big), 
    \label{eq:sage}
\end{equation}
where $\mW^{(k)}_{\text{self}}, \mW^{(k)}$ are learnable parameters and $\sigma(\cdot)$ is a nonlinearity mapping. At the first layer, we usually have $\vh^{(0)}_v = \vx_v$, that are text features extracted from the node.
Computing $\vh_v$ can be as expensive as $\gO(S^K C)$, where $\tau(\cdot)$ is a LM with inference cost $C$, $S$ is the neighborhood size, and $K$ is the number of GNN layers.

\section{Towards Graph-Aware Knowledge Distillation}\label{sec:analysis}
As discussed in \Secref{sec:pb}, we seek to learn a \graphless model $\tau'$ for fast inference, that also achieves as effective node classification as a GNN model $f$.   

\subsection{Knowledge Distillation} \label{sec:kd}
A straightforward solution is to use the Knowledge Distillation (KD) technique~\cite{hinton2015distilling}, where a powerful teacher model transfers knowledge to a simpler student model. In our case, the teacher model corresponds to a graph-based model (GNN), while the student model is a \graphless model, such as an LM.   

We follow the standard KD paradigm, in which the teacher distills knowledge via soft-labels. At a high level, the algorithmic procedure is 
\begin{itemize}
    \item \textbf{First Stage}. The GNN teacher is trained for node classification.
    The objective is given by
    \begin{equation}
        \gL_{\text{nc}} = \sum_{v \in \gV^L} l_{\text{CE}}( \hat{\vt}_v , \vy_v),
    \end{equation}
    where $l_{\text{CE}}$ is the standard cross-entropy and \begin{equation}
        \hat{\vt}_v = \mlp \Big(\gnn \big( \{\vx_u : u \in \gG_v\} \big) \Big)
        \label{eq:gnn-kd}
    \end{equation}
    are teacher's label predictions (logits). Numerical features $\vx_u$ are learned by fine-tuning an LM $\tau(\cdot)$ in an end-to-end manner; see \Eqref{eq:bert-trans}. The trained  GNN teacher generates soft-labels $\hat{\vt}_v$ for all nodes $v \in \gG$. %
    \item \textbf{Second Stage}. Another LM  $\tau'(\cdot)$ is trained to mimic GNN's predictions $\hat{\vt}_v$.
    The LM is optimized via 
    \begin{equation}
        \gL_{\text{KD}} = \sum_{v \in \gV^L} l_{\text{CE}}(\hat{\vs}_v, \vy_v) + \lambda \sum_{v \in \gV} l_{\text{KL}}(\hat{\vs}_v, \hat{\vt}_v),
        \label{eq:kd}
    \end{equation}
    where 
    \begin{equation}
        \hat{\vs}_v = \mlp \big(\tau'(X_v) \big)
    \end{equation}
    are the student's predictions and  $l_{\text{KL}}$ is the KL-divergence between the student's and teacher's logits. Hyper-parameter  $\lambda \in \sR$  controls the relative importance of the knowledge distillation term.
\end{itemize}

\subsection{What Does Knowledge Distillation Learn? An Analysis} \label{sec:mi}
In many cases, only a subset of the nodes in a graph is labeled; it is denoted as $\gV^L$. Since the GNN's predictions  $\hat{\vt}_v$ are treated as soft-labels, the second term in \Eqref{eq:kd} allows KD to fine-tune its LM $\tau'$ over both labeled and unlabeled nodes.
However, as $\hat{\vt}_v$  are pre-computed after the first stage of GNN training, the \graphless student does not use the actual graph structure during learning (second stage). In other words, nodes are treated independently from the underlying graph and the \graphless model can only infer how nodes interact via the provided soft-labels by the GNN. 

Next, we quantify the importance of capturing node interactions, which benefits many applications such as community detection and label propagation in graphs~\cite{jia2020residual}.  As discussed in \Secref{sec:gnns}, the node classification objective aims at transforming a subgraph $\gG_v$ centered around node $v$ to its label $\vy_v$. In information theory, this is equivalent to maximizing the mutual information $I(\cdot)$,
\begin{equation}
    \max_f \sum_{v \in \gV^L} I_f(\vy_v; \gG_v),
\end{equation}
between $\gG_v$ and label distribution $\vy_v$, where $I_f$ is parametrized by the GNN function $f$. It is always true that $I(\cdot) \geq 0$.

If we consider $\gG_v$ as a joint distribution of the text nodal feature set  $\gX_v = \{X_u: u \in \gG_v\}$ and edge set $\gE_v $, we have
\begin{align}
    I(\vy_v; \gG_v) = I(\vy_v ; (\gX_v, \gE_v)) = 
    I(\vy_v; \gX_v ) + I(\vy_v ; \gE_v| \gX_v),
\end{align}
which is obtained via the chain rule of mutual information. Now by setting $\tilde{\gX}_v = \gX_v \setminus \{X_v\}$, we can further obtain
\begin{align}
 I(\vy_v; \gG_v) = 
 I(\vy_v; X_v) + I(\vy_v;\tilde{\gX}_v | X_v)  + I(\vy_v ; \gE_v| \gX_v).
    \label{eq:mi2}
\end{align}
The joint mutual information is decomposed into three terms, (i) the information that comes from the node itself, (ii) the information that comes from other nodes in $\gG_v$, and (iii) the additional information that comes from the actual links between nodes.

\begin{remark} \label{remark1}
If the neighbor set or the graph structure is not utilized during learning, only $I(\vy_v; X_v) $ of \Eqref{eq:mi2} can be maximized. 
\end{remark}

\begin{corollary} \label{corollary}
Assume that predictions $\hat{\vt}_u$ for unlabeled nodes $u \in \gV^U$ are obtained via \Eqref{eq:gnn-kd} with a GNN $f$ and an LM $\tau$. The \graphless student of KD, that employs an LM $\tau'$, solves
\begin{align} \label{eq:kd-frame}
    \max_{\tau'} \quad &  \sum_{u \in \gV^U} I_{\tau'}(\hat{\vt}_u; X_u), 
\end{align}
where
\begin{align} \label{eq:kd-frame2}
\begin{split}
     \quad & \hat{\vt}_u = f \big(  \tau (X_{u'}) : u' \in \gG_u \big), \\
     \textrm{s.t. } \quad  &  f, \tau = \argmax_{\tilde{f}, \tilde{\tau}} \sum_{v \in \gV^L} I_{\tilde{f}, \tilde{\tau}}(\vy_v; \gG_v).
\end{split}
\end{align}
\end{corollary}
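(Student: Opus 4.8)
The plan is to reduce the second stage of KD to a variational mutual-information maximization via the standard Barber--Agakov lower bound, and to recognize the first stage as the analogous maximization that produces the teacher, so that composing the two gives the nested optimization in the statement. First I would observe that, restricted to the unlabeled nodes $\gV^U$, the student objective $\gL_{\text{KD}}$ in \Eqref{eq:kd} contains only the KL term, so there $\tau'$ effectively minimizes $\sum_{u \in \gV^U} l_{\text{KL}}(\hat{\vs}_u, \hat{\vt}_u)$ with $\hat{\vs}_u = \mlp(\tau'(X_u))$. Because the teacher is frozen after the first stage, $\hat{\vt}_u$ — and hence its entropy $H(\hat{\vt}_u)$ — does not depend on $\tau'$, so this is equivalent to minimizing the cross-entropy $\sum_{u \in \gV^U} H(\hat{\vt}_u, \hat{\vs}_u)$ between the fixed soft labels and the student's predictive distribution.

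Next I would treat the node index $u$ as drawn uniformly over $\gV^U$, which turns $X_u$ and $\hat{\vt}_u$ into jointly distributed random variables, and apply the variational bound $I(\hat{\vt}_u; X_u) \ge H(\hat{\vt}_u) - \E_u\!\left[ l_{\text{CE}}\big(\hat{\vt}_u,\, q_{\tau'}(\cdot \mid X_u)\big) \right]$, valid for any conditional distribution $q_{\tau'}$ and tight when $q_{\tau'}$ matches the true posterior $p(\hat{\vt}_u \mid X_u)$. Here $q_{\tau'}(\cdot \mid X_u) = \softmax(\mlp(\tau'(X_u)))$ is precisely the student's predictive distribution, so since $H(\hat{\vt}_u)$ is constant in $\tau'$, maximizing the right-hand side over $\tau'$ coincides with the cross-entropy minimization of the previous step. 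Thus the student solves $\max_{\tau'} \sum_{u \in \gV^U} I_{\tau'}(\hat{\vt}_u; X_u)$, where $I_{\tau'}$ denotes this variational estimate parametrized through $\tau'$, which is \Eqref{eq:kd-frame}. For the constraint \Eqref{eq:kd-frame2}, I would run the same device on the first stage: the teacher is trained by minimizing $\gL_{\text{nc}} = \sum_{v \in \gV^L} l_{\text{CE}}(\hat{\vt}_v, \vy_v)$ with $\hat{\vt}_v$ built from the computation graph $\gG_v$ through $f$ and $\tau$ as in \Eqref{eq:gnn-kd}; applying the Barber--Agakov bound with the true label $\vy_v$ as target and $\gG_v$ as input, and using that $H(\vy_v)$ is fixed by the data, this cross-entropy minimization is equivalent to $\max_{f,\tau} \sum_{v \in \gV^L} I_{f,\tau}(\vy_v; \gG_v)$. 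Substituting this characterization of the trained teacher into the student problem yields the full nested statement.

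The main obstacle I anticipate is the information-theoretic bookkeeping: specifying the sampling distribution over nodes that makes $I_{\tau'}(\hat{\vt}_u; X_u)$ and $I_{f,\tau}(\vy_v; \gG_v)$ well defined, being explicit that these are variational lower bounds rather than exact mutual informations, and verifying that the $\softmax\circ\mlp$ head is expressive enough for the bound-tightening argument to be legitimate; beyond that, everything is a direct substitution into the standard "distillation as mutual-information transfer" argument. It is also worth stating explicitly, to set up the contrast with Remark~\ref{remark1}, that although $\hat{\vt}_u$ encodes graph structure through the teacher, the student's objective only ever couples $\hat{\vt}_u$ with the single node text $X_u$, so none of the terms $I(\vy_v; \tilde{\gX}_v \mid X_v)$ or $I(\vy_v; \gE_v \mid \gX_v)$ from \Eqref{eq:mi2} enters the student's optimization directly — they are accessible to it only as far as they have been compressed into the soft labels by the frozen teacher.
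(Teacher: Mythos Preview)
The paper does not supply a formal proof of this corollary; it is presented as an information-theoretic recasting of the two-stage KD procedure of Section~\ref{sec:kd}, relying on the earlier one-line identification of cross-entropy node classification with mutual-information maximization stated just before Remark~\ref{remark1}. Your derivation via the Barber--Agakov variational bound is correct and makes that identification precise---both for the teacher stage over $\gV^L$ and the student's KL term over $\gV^U$---so you are filling in exactly the step the paper leaves implicit rather than taking a different route. Your closing paragraph, noting that the student only couples $\hat{\vt}_u$ with the single text $X_u$ and hence cannot directly optimize the neighbor or edge terms of \Eqref{eq:mi2}, matches the paper's own commentary immediately following the corollary and its subsequent appeal to the data-processing inequality.
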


The student models solves \Eqref{eq:kd-frame} without directly accessing the graph structure $\gG$.  Since $\hat{\vt}_u$ are obtained through a teacher GNN in \Eqref{eq:kd-frame2}, which maximizes all three parts of \Eqref{eq:mi2}, it is possible that $\hat{\vt}_u$ can still implicitly contain graph information from $\gG_u$. However, due to the data-processing inequality of mutual information, it holds that
\begin{equation}
    I_{f,\tau}(\vy_u; \gG_u) \geq  I(\vy_u; \hat{\vt}_u).
\end{equation}
This means that,  despite being a function of $\gG_u$, the
information contained in $\hat{\vt}_u$ w.r.t. $\vy_u$ is less than that of the original $\gG_u$. Thus, student's performance depends on \emph{how informative} $\hat{\vt}_u$ are, compared to $\gG_u$.

Note that the  objectives of \Eqref{eq:kd-frame} and \Eqref{eq:kd-frame2} are applied over different sets of nodes $\gV^U$ and $\gV^L$, which does not ensure that GNN $f$ is the optimal function to encode graph information for nodes $u \in \gV^U$.
For example, a GNN might  overfit to nodes that appear frequently in the training subgraphs, and ignore nodes that appear infrequently.  
As a result, the quality of predictions $\hat{\vt}_u$ is in question. 
More importantly, it is not clear how the \graphless student  will perform on unseen nodes during training and, for which, soft-labels $\hat{\vt}_u$ cannot be apriori computed by the GNN. However  in textual graphs, nodes are associated with informative textual features (e.g., paper abstracts) and a well-trained student could potentially handle the textual information of the unseen (inductive) nodes. We provide a failing case of an under-trained student in Appendix~\ref{app:coro}.

\begin{figure*}[t]
    \centering
    \includegraphics[width=\linewidth]{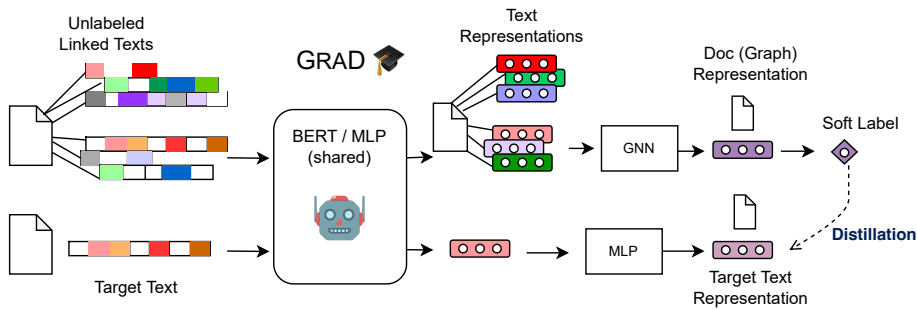}
    \caption{\grad framework. \grad captures textual information among unlabeled linked texts by allowing the teacher GNN and the \graphless student to jointly update the shared text encoding function.}
    \label{fig:framework}
\end{figure*}

\section{\grad \gradbertemoji{} Framework} \label{sec:grad}
In this work, we take a different approach from the one shown in \Eqref{eq:kd-frame}. As we cannot use the graph structure during inference, our \underline{GR}aph-\underline{A}ware \underline{D}istillation (\grad)
improves structure utilization during training. The motivation is to use the original graph $\gG$ while training the \graphless student, even if GNN predictions $\hat{\vt}_u$ may (or may not) implicitly contain such graph information.

Our \grad framework does this by coupling the teacher GNN with its graph-free student with a shared function $\tau(\cdot)$. This allows the GNN to directly encode structural information into $\tau$, that is then used by the \graphless model during inference. The overall framework is illustrated in \Figref{fig:framework}. 
\grad is posed as a multi-task learning for the shared LM, whose goal is to collectively optimize the GNN teacher and the \graphless student models. The learning problem is given by
\begin{align} \label{eq:grad-frame}
\begin{split}
    \max_{f,\tau} \quad & \sum_{v \in \gV^L} I_{f,\tau}(\vy_v; \gG_v) +  \sum_{v \in \gV^U} I_{\tau}(\hat{\vt}_u; X_u), \\
     \textrm{where } \quad & \hat{\vt}_u = f \big(  \tau (X_{u'}) : u' \in \gG_u \big),
\end{split}
\end{align}
and function $\tau$ contributes in both objectives that \grad maximizes. 
This results in a \emph{coupled} multi-objective optimization, since predictions $\hat{\vt}_u$ depend on function $\tau$ and thus, tie the GNN $f$ with the \graphless model together. 

Notably, the graph-free student uses soft-labels provided by the GNN to fine-tune the LM over textual information from unlabeled nodes; see second term of \Eqref{eq:grad-frame}. This, consequently, ensures that the GNN leverages all the graph's textual information (in an implicit manner) to compute $\hat{\vt}_u$, which might have been neglected in \Eqref{eq:kd-frame2}. As  the GNN teacher better encodes graph-aware textual information from the input (transdutive) nodes,  the \graphless student manages to \emph{train its own GNN teacher} via the shared LM. 

In what follows, we present three different optimization strategies (\gradj, \grada, and \gradkd)  for solving \Eqref{eq:grad-frame}. \Figref{fig:grad_strat} presents them at a high-level, and their key difference is on how tight they couple the teacher and the student model.

\begin{figure*}[t]
    \centering
    \includegraphics[width=\linewidth]{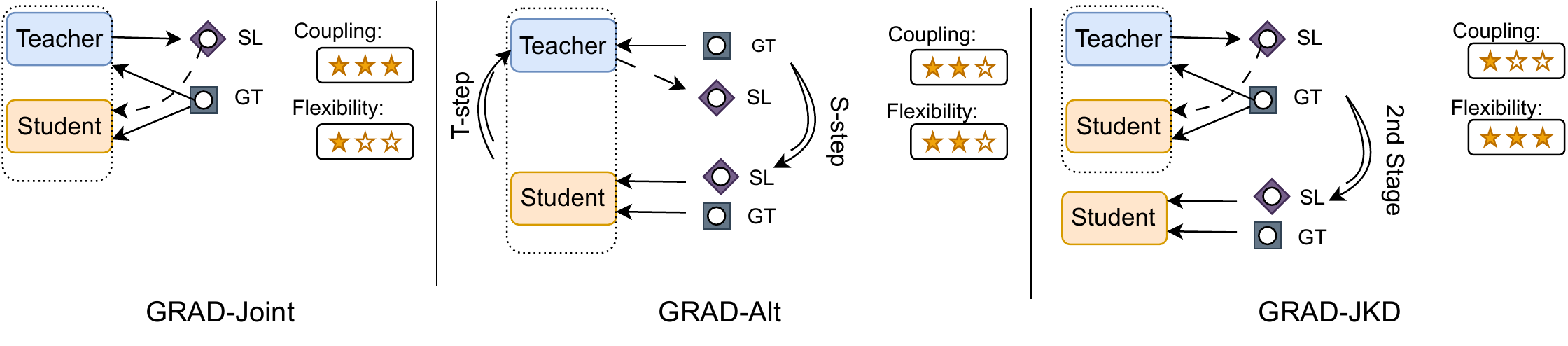}
    \caption{\grad strategies for coupling the GNN teacher and the \graphless student. \textit{SL} denotes soft-label and \textit{GT} denotes ground-truth label. \gradj couples the two models in a single step, \grada couples them in an alternate fashion, while \gradkd couples them in the first stage and decouples them in the second stage. For illustration purposes only, the star metric ($\star$) quantifies the teacher-student coupling tightness and the flexibility of the student model.}
    \label{fig:grad_strat}
\end{figure*}

\subsection{\gradj} \label{sec:gradj}
The first strategy (\gradj) optimizes \grad framework in a single-stage, where the teacher and the student are jointly updating $\tau$. 
\gradj objective is given by
\begin{align}\label{eq:grad}
\begin{split}
    \gL_{\text{Joint}}  =  \lambda \sum_{v \in \gV} l_{\text{KL}}(\hat{\vs}_v, \hat{\vt}_v) +  \sum_{v \in \gV^L} \big[ \alpha l_{\text{CE}}(\hat{\vt}_v, \vy_v) + (1-\alpha) 
      l_{\text{CE}}(\hat{\vs}_v, \vy_v) \big]  , 
\end{split}
\end{align}
where $\lambda \in \sR$ and $\alpha \in [0,1]$ are hyperparameters that control the importance of each term and ensure that the GNN models does not learn to ignore the structure.  The first term is the KD loss between GNN teacher's predictions $\hat{\vt}_v$ and \graphless student's predictions $\hat{\vs}_v$, and the other two terms are for ground-truth labels $\vy_v$. 
Predictions $\hat{\vs}_v$ and $\hat{\vt}_v$ are optimized via a shared encoding function $\tau(\cdot)$ (multi-task learning).

We highlight the dynamic interplay between the GNN teacher and the \graphless student in \Eqref{eq:grad}. The GNN teacher provides a soft-label $\hat{\vt}_v$, which the \graphless student tries to mimic via $l_{\text{KL}}(\hat{\vs}_v, \hat{\vt}_v)$. However, instead of training $\tau(\cdot)$ to simply mimic the GNN predictions, term $l_{\text{CE}}(\hat{\vt}_v, \vy_v)$ encourages $\tau(\cdot)$ to generate text representations that are beneficial for the GNN teacher. This co-learning between the GNN teacher and the \graphless student leads to graph-aware text representations, as the two models learn from each other.

\subsection{\grada} \label{sec:grada}
 A common challenge in multi-task learning is negative transfer~\cite{zhang2020surveynegative}, in which the performance improvement of a specific task leads to performance degradation of other tasks. 
Our second strategy (\grada) alleviates this issue, which might be present in \gradj, by optimizing the teacher and the student model in an alternate fashion. \grada objective is given by
\begin{align}
   & \gL_{\text{T-step}}  = \sum_{v \in \gV^L}  l_{\text{CE}}(\hat{\vt}_v, \vy_v),\label{eq:estep}\\
   & \gL_{\text{S-step}}  =  \lambda \sum_{v \in \gV} l_{\text{KL}}(\hat{\vs}_v, \hat{\vt}_v) +  \sum_{v \in \gV^L}   l_{\text{CE}}(\hat{\vs}_v, \vy_v), 
    \label{eq:mstep}
\end{align}
where the teacher objective $\gL_{\text{T-step}}$ and the student objective $\gL_{\text{S-step}}$ alternately update function $\tau$. Different from \gradj, decoupling S-step from the teacher gives more flexibility to the student to fit to its teacher's predictions.

\subsection{\gradkd} \label{sec:gradkd}

Solving \Eqref{eq:estep} and \Eqref{eq:mstep} consists of many alternate optimization steps that update $\tau$. This is associated with significant computational costs when $\tau$ is a large LM and when the input graph consists of a large number of nodes.

Our third strategy (\gradkd) is motivated by the benefits discussed in \Secref{sec:gradj} that lead to improving the GNN teacher as well as the benefits discussed in \Secref{sec:grada}  that lead to improving the flexibility of the \graphless student. \gradkd consists of two distinct stages, as summarized below
\begin{itemize}
    \item \textbf{First Stage}. The GNN teacher is jointly trained with its \graphless student via objective $\gL_{\text{Joint}}$; see \Eqref{eq:grad}. 
    \item \textbf{Second Stage}. The GNN teacher is decoupled by its student, and the student is retrained alone to mimic the teacher's predictions via 
    \begin{equation}
        \gL_{\text{KD}}  =  \lambda \sum_{v \in \gV} l_{\text{KL}}(\hat{\vs}_v, \hat{\vt}_v) +  \sum_{v \in \gV^L}   l_{\text{CE}}(\hat{\vs}_v, \vy_v). 
    \end{equation}
\end{itemize}
The second stage is similar to conventional KD, which is why we term this strategy \gradkd (\ngradj+KD). In our case, however, the GNN teacher has been trained with a \graphless student, which allows it to better leverage textual information from unlabeled nodes. %

\subsection{Student Models} \label{sec:gradbert-gnn}

A powerful function $\tau(\cdot)$, that can encode information from both the GNN and the \graphless model, is a key factor in the co-optimization of the teacher and student model. We use LMs, such as \bert~\cite{devlin2019bert}, to instantiate $\tau(\cdot)$, and name our method \gradbert. \gradbert is fine-tuned end-to-end, which is shown to benefit node classification tasks~\cite{ioannidis2022graph-aware-bert}. 

\gradbert requires raw texts on nodes and assumes the graphs have adequate data for training. Although not common in practical scenarios, some graphs might be of small scale and not associated with  raw texts. In such cases, we model $\tau(\cdot)$ by bag-of-words or TF-IDF text vectors (if provided), followed by trainable MLPs, which we name  \gradmlp. 

\section{Experimental Setup} \label{sec:exp-setup}
\subsection{Datasets} 
For \gradbert, we use three widely used node classification benchmarks that provide raw text on nodes, ogbn-arxiv (Arxiv), ogbn-products (Products), and a smaller version of ogbn-papers100M (Papers1.5M)~\cite{hu2020ogb,chien2022giant}. We also use a private heterogeneous graph (Product-Reviews) in Appendix~\ref{app:prod-rev}, and \emph{perturbed} datasets in Appendix~\ref{app-pert}.
For \gradmlp, we use Cora, Citeseer, Pubmed, A-Computer, and A-Photo~\cite{yang2021cpf}, where input features are bag-of-words or TF-IDF word encodings. 
The statistics of these datasets are presented in Appendix~\ref{app:data} and the evaluation metric is accuracy (\%).

\subsection{Implementation Details} 
For our GNN teacher, we use GraphSAGE~\cite{hamilton2017graphsage} for both the \gradbert and the \gradmlp students. To further reduce the training computation cost for \gradbert, we use an 1-layer GraphSAGE and sample $S \in \{8,12\}$ neighbors for \Eqref{eq:sage}, which has a training cost of $\gO(SC)$ per node. 
We initialize \gradbert parameters with  SciBERT~\cite{beltagy2019scibert} for Arxiv and Papers1.5M  that provides better tokenization for scientific texts. Due to computational constraints, we use a smaller LM for Products, i.e.,  DistilBERT~\cite{sanh2019distilbert}, and we  only use \grada for small-scale graphs. 
 Further implementation details can be found in Appendix~\ref{app:impl}. Ablation studies on \grad's framework are conducted in Appendix~\ref{app:hyper}. 

\subsection{Compared Methods}  

We develop the BERT+KD baseline, which employs conventional GNN-to-BERT KD via~\Eqref{eq:kd}. GLNN~\cite{zhang2022glnn} is a GNN-to-MLP distillation approach that does not leverage LMs, to which we refer as MLP+KD baseline. 
We also compare \gradbert with methods that fine-tune LMs on graphs. Graph-Aware BERT (GA-BERT)~\cite{ioannidis2022graph-aware-bert} and BERT-LP~\cite{chien2022giant} fine-tune BERT models to solve the link prediction task. GIANT~\cite{chien2022giant} fine-tunes BERT to solve neighborhood prediction, which is a task similar to link prediction. E2EG~\cite{dinh2022e2eg} trains GIANT for neighborhood prediction and node classification end-to-end. GLEM~\cite{zhao2022glem} utilizes variational inference to jointly optimize the LM and the GNN for node classification. 
For a fair comparison, we have the same GNN architecture (GraphSAGE) among methods that use GNNs (\grad, KD, GA-BERT, and GLEM) during training.

\section{Experimental Results}

In the following experiments, we assess \grad's performance for node classification over textual graphs. In \Secref{sec:exp-graphless}, we compare \gradbert with other methods that do not use graph structure during inference, as well as we assess the performance of \grad's GNN teacher. In \Secref{sec:exp-ind}, we evaluate how well different \grad strategies generalize to inductive (unseen) nodes. Furthermore, we conduct inference time analysis to demonstrate the efficiency advantage of \grad in Appendix~\ref{sec:inf} and provide qualitative examples in Appendix~\ref{sec:exp-fig}.

\subsection{\gradbert Results} \label{sec:exp-graphless}

\begin{table}[tb]
	\centering
	\caption{Performance comparison of \graphless methods.  Bold font denotes the overall best results. We also report the number of trainable parameters of each LM model in millions (\#Params).}
	\label{tab:res}%
	\resizebox{0.85\linewidth}{!}{
	\begin{threeparttable}
		\begin{tabular}{llc|lc|lc}
			\toprule
			 & \multicolumn{2}{c}{\textbf{Arxiv}} & \multicolumn{2}{c}{\textbf{Products}} & \multicolumn{2}{c}{\textbf{Papers1.5M}} \\
    & Acc. (\%) & \#Params & Acc. (\%) & \#Params  & Acc. (\%) & \#Params \\
			\midrule
			
            MLP &  $62.91_{\pm 0.60}$ &\small{No LM} & $61.06_{\pm 0.08}$ &No LM& $47.24_{\pm 0.39}$ &No LM\\
            GLNN & $72.15_{\pm 0.27}$ &No LM& $77.65_{\pm 0.48}$ & No LM& \multicolumn{2}{c}{-}\\
            \midrule 
            BERT &  $72.81_{\pm 0.12}$ &110M& $77.64_{\pm 0.08}$ & 110M& $61.45_{\pm 0.07}$ &110M\\
            BERT-LP &  $67.33_{\pm 0.54}$ &110M& $73.83_{\pm 0.06}$ &110M&  \multicolumn{2}{c}{-}\\
            GIANT  &  $73.06_{\pm 0.11}$ &110M& $80.49_{\pm 0.28}$ &110M & $61.10_{\pm 0.19}$ &110M\\
            E2EG & $73.62 _{\pm 0.14}$& 110M& $81.11_{\pm 0.37}$ &110M & \multicolumn{2}{c}{-}\\
            GLEM &  $74.53_{\pm 0.12}$ &138M& $81.25_{\pm 0.15}$ &138M &\multicolumn{2}{c}{-}\\
			\midrule 
            BERT+KD & $74.39_{\pm 0.32}$ &110M& $\underline{81.91}_{\pm 0.64}$ &66M & $61.85_{\pm 0.04}$ &110M\\
            \midrule
            \gradbert \gradbertemoji{} & & &  & & &\\
            \; \ngradj &  $\underline{74.92}_{\pm 0.16}$ &110M& $81.42_{\pm 0.40}$& 66M& $\underline{63.44}_{ \pm 0.05}$ &110M\\
            \; \ngradkd & $\textbf{75.05}_{\pm 0.11}$ &110M& $\textbf{82.89}_{\pm 0.07}$ &66M& $\textbf{63.60}_{\pm 0.05}$& 110M\\
			\bottomrule
		\end{tabular}%
		\end{threeparttable}
}

\end{table}%

Table~\ref{tab:res} shows performance results for methods that do not use graph structure during inference. Clearly, \gradbert is the method that performs the best.  MLP and GLNN  are methods that do not fine-tune LMs, and thus, perform poorly. GIANT and BERT-LP pretrain BERT to encode structural information, but BERT is fixed for node classification, which may be suboptimal. For example, E2EG, that adapts GIANT for node classification, improves GIANT by 0.61\% points on average. GLEM relies on powerful LMs to alternately fine-tune the teacher and the student models, and thus outperforms previous methods. However, GLEM does not show a clear advantage over our baseline BERT+KD method, while it utilizes 28M or 77M more parameters.
\gradbert outperforms BERT+KD by 0.66\%, 0.98\%, and 1.75\% points for Arxiv, Products, and Papers, respectively, which is a considerable improvement for these large-scale graphs. In Products, \gradkd improves over \gradj by 1.45\% points, and shows that JKD allows the student model to better fit to its teacher's predictions.

\begin{table}[tb]
	\centering
	\caption{GNN performance comparison for different methods that jointly fine-tune LMs for node  classification. The underlying GNN model is GraphSAGE. We also report the number of trainable parameters of each backbone LM model in millions (\#Params).}
	\label{tab:gnn-teacher}%
	\resizebox{0.9\columnwidth}{!}{
	\begin{threeparttable}
		\begin{tabular}{llc|lc|lc}
			\toprule
			& \multicolumn{2}{c}{\textbf{Arxiv}} & \multicolumn{2}{c}{\textbf{Products}} & \multicolumn{2}{c}{\textbf{Papers1.5M}} \\
    & Acc. (\%) & \#Params & Acc. (\%) & \#Params  & Acc. (\%) & \#Params \\
			\midrule
			
            BERT-GNN  &  $74.78_{\pm 0.52}$ & 110M & $82.01_{\pm 0.43}$ & 110M & $65.80_{\pm 0.23}$ & 110M \\
            GA-BERT-GNN & $74.97$ & 110M & $82.35$ & 110M & \multicolumn{2}{c}{-} \\
            GLEM-GNN &  $75.50_{\pm 0.24}$ & 138M & $83.16_{\pm 0.19}$ & 138M  & \multicolumn{2}{c}{-} \\
            \grad-GNN  &  $\textbf{76.42}_{\pm 0.21}$ & 110M & $\textbf{83.34}_{\pm 0.24}$ & 66M &  $\textbf{66.61}_{\pm 0.22}$ & 110M \\
			\bottomrule
		\end{tabular}%
		\end{threeparttable}
}

\end{table}%

Table~\ref{tab:gnn-teacher} evaluates the performance of GNNs combined with LMs (LM-GNN methods). \grad's GNN teacher performs the best, which verifies that \grad's student improves its teacher's effectiveness. BERT-GNN is the baseline LM-GNN with end-to-end training, which performs the worst. GA-BERT enhances BERT with a link prediction task and leads to better results than BERT-GNN. GLEM that leverages unlabeled nodes, as \grad does, performs better than BERT and GA-BERT. However, it does not train the LM-GNN model end-to-end, which limits its performance compared to \grad,  while it requires larger LM models. In Appendix~\ref{app:gnn-full}, we also evaluate \gradbert with state-of-the-art downstream GNNs. 

In transductive settings, \graphless students could perform well by imitating their GNN teachers' predictions without learning effective features (\Secref{sec:mi}). Thus, we evaluate our \grad approach in the inductive setting, where well-trained students are essential to generalize to features of unseen nodes, which are not connected to the existing graph during training. Table~\ref{tab:inductive} shows  \gradbert's performance in inductive settings. \gradbert outperforms the baseline BERT+KD by more than 0.7\% points, on average. Note that MLP+KD  uses a static BERT model and cannot generalize well to unseen nodes. As Table~\ref{tab:inductive} shows, methods that use conventional KD (BERT+KD and \gradkd) have a higher performance drop for inductive nodes. This suggests that conventional KD favors transductive settings while it might be limited for inductive nodes, which is also analyzed in Appendix~\ref{app:coro}.

\begin{table}[tb]
	\centering
	\caption{Performance comparison of different distillation strategies in the inductive setting, in which we hold 50\% of validation and test nodes out of the full graph. \textit{Full} means that methods are trained and evaluated on the full graph, while \textit{ind} shows performance on inductive nodes. $\Delta_{\text{acc}}$ reports the relative performance degradation between the two settings. }
	\label{tab:inductive}%
	\resizebox{0.6\columnwidth}{!}{
	\begin{threeparttable}
		\begin{tabular}{lc|cccc}
			\toprule
              & & MLP+KD & BERT+KD & \multicolumn{2}{c}{\gradbert} \\
              \multicolumn{2}{l|}{Dataset} & & & \ngradj & \ngradkd \\
              \midrule
			 \multirow{3}{*}{\textbf{Arxiv}} & full & 68.26 &  74.43 & 74.45 & 75.15 \\
                            &  \textbf{ind} & 59.34 &  73.45 & \textbf{74.22} & 73.81\\
                            & $\Delta_{\text{acc}}$ & -13.06\% &   -1.32\% & \underline{-0.31}\% & -1.78\%  \\
             \hline
             \multirow{3}{*}{\textbf{Products}} & full & 77.20 &  82.31 & 81.47 & 82.82\\
                            &  \textbf{ind} & 72.93  & 80.94  & 81.24 & \textbf{81.69}\\
                            & $\Delta_{\text{acc}}$ &  -5.53\% &  -1.66\% & \underline{-0.28}\% & -1.36\%  \\
			\bottomrule
		\end{tabular}%
		\end{threeparttable}
}

\end{table}%

\subsection{\gradmlp Results} \label{sec:exp-ind}

\begin{table}[tb]
	\centering
	\caption{Performance comparison of different \grad strategies on transductive (tran) and inductive (ind) nodes. We sample 50\% nodes from the test set as the inductive nodes to evaluate every method. We report mean accuracy over 10 runs.}
	\label{tab:inductive-mlp}%
	\resizebox{0.6\linewidth}{!}{
	\begin{threeparttable}
        \begin{tabular}{lc|cc|cccc}
        \toprule
          & &  GLNN & GLEM & \multicolumn{3}{c}{\gradmlp (ours)}  \\
          \cline{5-7}
         \multicolumn{2}{l|}{Dataset} & (MLP+KD) & (MLP) & \ngradj  & \ngrada & \ngradkd \\
        \midrule
   \multirow{2}{*}{\textbf{Cora}} & tran& $76.72$ &  $76.78$&  $79.39$ &$79.20$  &$79.04$ \\
        & \textbf{ind} & $70.74$ & $70.44$ & $\textbf{73.00}$ & $72.86$ & $72.51$	 \\
        \hline
        \multirow{2}{*}{\textbf{Citeseer}} & tran& $69.55$ &  $69.06$  & $70.33$ & $70.96$  & $68.69$ \\
        & \textbf{ind} & $69.72$ & $69.33$ &   $70.38$ & $\textbf{70.80}$ & $69.28$ \\
        \hline
         \multirow{2}{*}{\textbf{Pubmed}} & tran& $74.00$ &  $76.38$ & $78.13$  & $77.71$ & $77.72$  \\
        & \textbf{ind} & $73.61$ & $75.53$  & $\textbf{77.54}$ & $76.96$ & $76.89$ \\
        \hline
        \multirow{2}{*}{\textbf{A-Comp}} & tran& $82.12$ & $81.15$  & $81.20$  & $82.60$ &   $82.28$ \\
        & \textbf{ind} & $79.11$ & $78.07$ & $78.79$ & $\textbf{80.09}$ &  $79.41$ \\
        \hline
        \multirow{2}{*}{\textbf{A-Photo}} & tran& $92.21$ & $89.83$ & $91.68$ & $91.70$ & $92.36$ \\
        & \textbf{ind} & $89.96$ & $86.49$  & $88.83$ & $89.03$ & $\textbf{90.09}$ 	  \\
        \bottomrule
	\end{tabular}%

	\end{threeparttable}
}
\end{table}%

Table~\ref{tab:inductive-mlp}  shows the performance of different \grad strategies for small-scale graphs, where MLPs uses as the backbone share function $\tau$. Methods that employ KD do not couple the teacher and student models sufficiently, and  thus do not learn as effective node features in these small-scale graphs. GLEM performs the worst as it requires a powerful student model that generates high-quality soft-labels, which is not the case for simple MLPs. \grada performs the best on Citeseer and A-Comp, while \gradj performs the best on Cora and Pubmed. We suspect that, in the second case, node texts are more informative (paper keywords) and thus, coupling the teacher and student models tightly improves the performance.  

Moreover, \Figref{fig:ind-abla} shows that \gradmlp consistently outperforms  MLP+KD with different inductive rates, ranging from 10\% to 75\%. In these small graphs, having an inductive ration close to 90\% results in training graphs with few connections among nodes. Thus, in this case, \gradmlp does not have useful graph information to leverage. 

\section{Related Work}

\textbf{Learning on Textual Graphs}. A common approach for effective learning on textual graphs is to combine LMs~\cite{devlin2019bert,liu2019roberta,raffel2020t5} with GNNs~\cite{kipf2017gcn,velivckovic2017gat,hamilton2017graphsage}. Methods such as~\cite{ioannidis2022graph-aware-bert,chien2022giant,yasunaga2022dragon} focus on pre-training LMs over graph data to preserve node-level structure, while~\cite{yasunaga2022linkbert} focuses on token-level information. Methods such as~\cite{zhao2022glem,dinh2022e2eg,yang2021graphformers,zhang2020graphbert} focus on fine-tuning LMs for solving node classification directly, which is more related to our work. Experiments showed that \grad outperforms these competing approaches. Apart from node classification, leveraging GNNs for learning on textual graphs has been applied to question answering~\cite{mavromatis2022rearev,zhang2022greaselm}, sponsored search~\cite{zhu2021textgnn,li2021adsgnn}, and document classification~\cite{yao2019textgnn1,huang2019textgnn2}.

\textbf{Distillation Approaches}. Distillation approaches mainly focus on model compression (see a survey in~\cite{gou2021knowledge}). It is worth to mention that ~\cite{zhang2019selfdistillation} proposed a self-distillation technique that distills knowledge from deeper layers to shallow ones of the same architecture. Inspired by self-distillation, \gradj self-distills knowledge from deeper layers (GNN) to shallow layers (LM or MLP). However, our work is motivated by capturing node interactions, and not by model compression. Moreover,~\cite{yuan2020revisitingsd} shows a connection between self-distillation and label smoothing, which may be interpreted as a graph-aware label smoothing in our case. 
Closely related to our work, GLNN~\cite{zhang2022glnn} and ColdBrew~\cite{zheng2021coldbrew} propose to distill knowledge from a graph modality to a graph-less modality. However, experiments showed that \grad learns better graph information than GLNN for both inductive and transductive settings, while ColdBrew can only be applied to transductive settings. Graph-regularized MLPs~\cite{zhou2003lapreg,ando2006lapreg,yang2021preg,hu2021graphmlp,dong2022n2n} are also methods that improve MLP performance for node classification. However, their performance is inferior to the one achieved by KD approaches. Other graph-based distillation approaches focus on  distilling large GNNs to smaller GNNs~\cite{yang2020distillinggcn,yan2020tinygnn} or to simpler graph models~\cite{yang2021extract}. Finally, the work in~\cite{deng2021graph-free} distills knowledge from pretrained GNNs and the work in~\cite{xu2020graphsail} applies graph-based distillation for incremental learning in recommender systems.

\begin{figure}[t]
  \centering
    \resizebox{0.8\columnwidth}{!}{\definecolor{col1}{rgb}{0.60, 0.31, 0.64}
\definecolor{col2}{rgb}{0.30, 0.69, 0.29}
\definecolor{col3}{rgb}{0.22, 0.49, 0.72}
\definecolor{col4}{rgb}{0.89, 0.10, 0.11}
\definecolor{col5}{rgb}{1, 1, 0.8}

\begin{tikzpicture}
\tikzstyle{every node}=[font=\Huge]

\begin{axis}[legend style={at={(.9,0.9),anchor=north east}},
             legend style={legend pos=outer north east,},  title=Cora,ylabel={Accuracy (\%)}, every axis plot/.append style={ultra thick}, ymin=60, ymax=75, compat=1.5
]
\addplot[mark=square*,col2] coordinates {
    (10, 72.58)
    (25, 73.66)
    (50, 73.00)
    (75, 68.83)
    (90, 66.44)
};

\addplot[mark=*,col4] coordinates {
    (10,69.58)
    (25, 71.11)
    (50, 70.74)
    (75, 65.54)
    (90, 65.45)
};
\end{axis}

\begin{axis}[legend columns=-1,legend style={at={(1,1.35),anchor=north east,}},  title={Citeseer}, xlabel={ Inductive Node Rate (\%)}, ymin=60, ymax=75,  every axis plot/.append style={ultra thick},  xshift=8cm, compat=1.5
]
\addplot[mark=square*,col2] coordinates {
    (10, 74.70)
    (25, 73.05)
    (50, 73.00)
    (75, 65.30)
    (90, 61.05)
};
\addlegendentry{\gradmlp \; \; } 

\addplot[mark=*,col4] coordinates {
    (10,71.27)
    (25, 69.96)
    (50, 69.72)
    (75, 67.17)
    (90, 64.51)
};
\addlegendentry{MLP+KD} 
\end{axis}

\begin{axis}[legend style={at={(.9,0.9),anchor=north east}},
             legend style={legend pos=outer north east,},  title=Pubmed, every axis plot/.append style={ultra thick}, ymin=65, ymax=80, xshift=16cm
]
\addplot[mark=square*,col2] coordinates {
    (10, 79.01)
    (25, 77.37)
    (50, 77.54)
    (75, 76.12)
    (90, 69.59)
};

\addplot[mark=*,col4] coordinates {
    (10, 77.07)
    (25, 75.22)
    (50, 73.61)
    (75, 71.43)
    (90, 70.56)
};
\end{axis}
\end{tikzpicture}}
    \caption{Performance on inductive nodes w.r.t. inductive node rate (inductive node rate equals to \#ind test nodes / \#total test nodes). \gradmlp consistently outperforms MLP+KD for reasonable inductive rates.}
    \label{fig:ind-abla}
\end{figure}
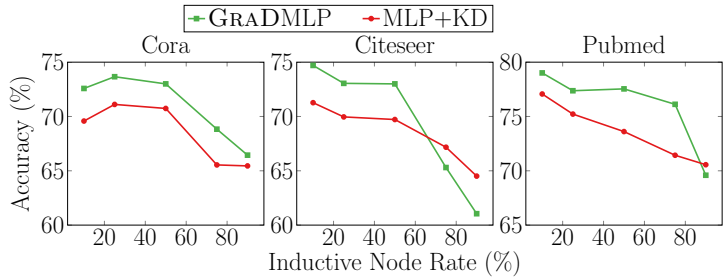
 
\section{Conclusion }
In this paper, we developed a graph-aware distillation approach (\grad) that jointly trains a teacher GNN and a \graphless student via a shared LM. This allows the two models to learn from each other and improve their overall performance.
We have evaluated \grad in eight node classification benchmarks in both transductive and inductive settings, in all of which \grad outperforms conventional knowledge distillation. \grad is a method that achieves a balance among efficiency and effectiveness in textual graphs. 

\section{ Limitations \& Ethical Statement}
\grad relies on informative input node features to learn effective shared LMs (or MLPs) that can generalize to unseen nodes, which is the case in textual graphs. Thus, one limitation is that it is not certain how \grad generalizes to other graphs, e.g., to featureless graphs. Moreover  as a knowledge distillation approach, \grad trades accuracy for computation efficiency and it cannot adapt to dynamic graphs with edge changes the same way as GNN could. 
To overcome biases encoded in the training graph, e.g., standard stereotypes in recommender graphs, \grad needs to be retrained over the new unbiased graph. 

\bibliographystyle{splncs04}
\bibliography{Distillation}

\newpage
\appendix

\begin{table*}[ht]
\centering
\caption{Statistics of the datasets.}
\label{tab:data}
\resizebox{\linewidth}{!}{

\begin{tabular}{l|c|c|c|c||ccccc}
\toprule
Text & \multicolumn{4}{c|}{\textit{raw text}} &  \multicolumn{5}{c}{\textit{bag-of-words / TF-IDF}} \\
Dataset & \textbf{Arxiv}    & \textbf{Products} & \textbf{Papers 1.5M}  & \textbf{\underline{P}roduct-\underline{R}eviews}  & \textbf{Cora} & \textbf{Citeseer} & \textbf{Pubmed} & \textbf{A-Comp} & \textbf{A-Photo}\\
\midrule
\multirow{2}{*}{\# Nodes} & \multirow{2}{*}{169,343} & \multirow{2}{*}{2,449,029} & \multirow{2}{*}{1,546,782} & P: 2,696,062 & \multirow{2}{*}{2,485} & \multirow{2}{*}{2,110}  & \multirow{2}{*}{19,717} &  \multirow{2}{*}{13,381} & \multirow{2}{*}{7,487} \\
  &  &  &  & R: 2,533,129 & & & & & \\
 \# Edges & 1,166,243  & 2,449,029 & 1,546,782 & 2,096,202 & 5,069 & 3,668 & 44,324 &  245,778 & 119,043 \\
 Avg. Degree & 13.7 & 50.5 & 17.7  & 0.7 & 2.0 & 1.74 & 2.3 & 18.4 & 15.9 \\
 \# Classes & 40 & 47 & 172 & 1,810 & 7 & 6 & 3 & 10 & 8 \\
Train / Val / Test  & 54 / 18 / 28 (\%) &  8 / 2 / 90 (\%) &  78 / 8 / 14 (\%) & Variable &  \multicolumn{5}{c}{\#20 per class / \#30 per class / rest }\\

\bottomrule
\end{tabular}

}
\end{table*}

\section{Dataset Statistics} \label{app:data}

The datasets statistics are shown in Table~\ref{app:data}. Arxiv, Papers, Cora, Citeseer, and Pubmed are citations graphs, while Products, A-Computer, and A-Photo are product co-purchase graphs. Product-Reviews is a heterogeneous graph.

\section{Further Implementation Details} \label{app:impl}

For our teacher GNN, we use GraphSAGE~\cite{hamilton2017graphsage} for both \gradbert and \gradmlp. Our code is available at: \url{https://github.com/cmavro/GRAD}.

\subsection{\gradbert}
For \gradbert, training requirements are demanding, so we implement it with DGL's~\cite{wang2019deep}\footnote{\url{https://www.dgl.ai/}} distributed training~\cite{zheng2020distdgl} and perform training on a p4d.24xlarge instance. To further reduce the training computation cost, we use an 1-layer GraphSAGE for \Eqref{eq:sage}, which has a training cost of $\gO(SC)$ per node. We sample $S=8$ neighbors for Arxiv and Papers, and $S=12$ neighbors for Products. During training, we use a sequence length $L=128$, and during inference, we use either set $L=128$  for ablation studies or $L=512$ for SOTA performance.
We initialize \gradbert parameters with  SciBERT~\cite{beltagy2019scibert} for Arxiv and Papers1.5M and with DistilBERT~\cite{sanh2019distilbert} for Products. For Product-Reviews, we pretrain BERT over the given product descriptions.

\gradj's hyper-parameter $\alpha$ is tuned amongst $\{0.2, 0.4\}$ and hyper-parameter $\lambda$ amongst $\{20,50\}$. Due to demanding computational requirements, we do not perform extensive hyper-parameter search. We set the number of epochs to $10$ and evaluate on the validation set every epoch. We perform model selection based on the best validation scores. We perform mini-batch training to maximize GPU utilization, where the batch size is tuned amongst $\{16,32,48\}$. For inference, we use a batch size of 128.  We optimize the model parameters with Adam~\cite{kingma2014adam} and tuning the learning rate for BERT for $\{1e^{-4}, 1e^{-5}, 1e^{-6}\}$ for the GNN amongst $\{1e^{-3}, 1e^{-4}, 1e^{-5}\}$. We set the number of hidden dimensions to $d=128$.  When combining \gradbert with KD (\gradkd), we perform 5 maximum epochs of KD with the same hyper-parameters and perform model selection based on the validation score.

\subsection{\gradmlp}
For a fair comparison to GLNN (MLP+KD), we use the official GLNN implementation and experimental setup\footnote{\url{https://github.com/snap-research/graphless-neural-networks}}. Similar to GLNN, we use a 2-layer GraphSAGE teacher, where $S=5$ neighbor nodes are sampled at each layer.

Hyper-parameter $\alpha$ of \grad objective is tuned amongst $\{0.2, 0.4\}$ and hyper-parameter $\lambda$ amongst $\{20,50\}$. The number of epochs is tuned amongst values of $\{100, 200, 400, 1000, 2000\}$ with early stopping. We use Adam optimizer with learning rate  tuned amongst $\{1e^{-4}, 1e^{-5}\}$, weight decay tuned amongst $\{5e^{-5}, 5e^{-6}\}$ and dropout~\cite{srivastava2014dropout}. We perform model selection based on the validation scores. When combining \gradmlp with KD, we tune $\alpha$ amongst $\{ 0.4, 0.6, 0.8\}$, which gives more importance to the GNN teacher. Hidden dimensions are set to $d=128$, and we have identical MLP architectures with GLNN. We perform full-graph training. For \grada we perform T-S steps at every optimization step of the full-graph training (1 TS cycle for every target node at each epoch).

\section{Inference Timing Analysis} \label{sec:inf}
We provide inference time analysis between \gradbert and GNNs on two different machines (p4d.24xlarge and p3.16xlarge) for ogbn-arxiv. The p4d.24xlarge is estimated to be 2.5x faster than the p3.16xlarge, while it has 2.5x more GPU memory. As \Figref{fig:abla-inf} shows, \gradbert is the method that that best balances efficiency and effectiveness. 
\begin{figure}
  \centering
    \resizebox{\linewidth}{!}{\definecolor{col1}{rgb}{0.60, 0.31, 0.64}
\definecolor{col2}{rgb}{0.30, 0.69, 0.29}
\definecolor{col3}{rgb}{0.22, 0.49, 0.72}
\definecolor{col4}{rgb}{0.89, 0.10, 0.11}
\definecolor{col5}{rgb}{1, 1, 0.8}

\begin{tikzpicture}

\tikzstyle{every node}=[font=\Large]
\begin{axis}[enlargelimits=false, ylabel={\Large Accuracy (\%)} , xlabel={\Large Inference Time (sec.)}, ymin=70, ymax=77,  legend pos=outer north east, title={\Large p4d.24xlarge-machine}, xtick={34, 218, 445, 600}, xticklabels={34, 218, 445, OOM}, xmin=-60, xmax=640]
    \addplot[
        scatter/classes={
        a={mark=*,mark size = 3pt,teal}, 
        b={mark=*,mark size = 3pt,brown}, 
        c={mark=*,mark size = 3pt,violet},
        d={mark=square*,mark size = 4pt,teal},
        e={mark=square*,mark size = 4pt,brown},
        f={mark=square*,mark size = 4pt,violet},
        g={mark=triangle*,mark size = 7pt,teal},
        h={mark=triangle*,mark size = 7pt,brown},
        i={mark=triangle*,mark size = 7pt,violet}},
        scatter, mark=*, only marks, 
        scatter src=explicit symbolic,
        nodes near coords*={\Label},
        mark size=2pt,
        visualization depends on={value \thisrow{label} \as \Label} %
    ] table [meta=class] {
        x y class label
        16 72.24 a \;
        16 70.43 b {\color{brown}\textbf{BERT}}
        218 73.38 c \;
        34 75.05 d \;
        34 71.81 e \;
        445 75.78 f {\color{violet}\textbf{GNN}}
        182 76.22 g {\color{teal}\textbf{GradBERT}}
        182 72.98 h \;
        600 70 i \;
    };

\end{axis}

\begin{axis}[ legend style={legend pos=outer north east,}, enlargelimits=false, xlabel={\Large  Inference Time (sec.)}, ymin=70, ymax=77, xshift=7.5cm, legend pos=outer north east, title={\Large  p3d.16xlarge-machine}, yticklabels={},xtick={0,161,727,1142, 1500}, xticklabels={0,161,727,1142, OOM}, xmin=-150, xmax=1500]
    \addplot[
        scatter/classes={
        a={mark=*,mark size = 3pt,teal}, 
        b={mark=*,mark size = 3pt,brown}, 
        c={mark=*,mark size = 3pt,violet},
        d={mark=square*,mark size = 5pt,teal},
        e={mark=square*,mark size = 5pt,brown},
        f={mark=square*,mark size = 5pt,violet},
        g={mark=triangle*,mark size = 9pt,teal},
        h={mark=triangle*,mark size = 9pt,brown},
        i={mark=triangle*,mark size = 9pt,violet}},
        scatter, mark=*, only marks, 
        scatter src=explicit symbolic,
        nodes near coords*={\Label},
        mark size=2pt,
        visualization depends on={value \thisrow{label} \as \Label} %
    ] table [meta=class] {
        x y class label
        80 72.24 a \;
        80 70.43 b {\color{brown}\textbf{BERT}}
        1142 73.38 c {\color{violet}\textbf{GNN}}
        161 75.05 d {\color{teal}\textbf{GradBERT}}
        161 71.81 e \;
        1500 70 f \;
        727 76.22 g {\color{teal}\textbf{GradBERT}}
        727 72.98 h {\color{brown}\textbf{BERT}}
        1500 70 i \;
    };
    
    \legend{L=64 GradBERT, -- \; BERT , -- \; GNN ,
    L=128 GradBERT, -- \; BERT , -- \; GNN ,
    L=512 GradBERT, -- \; BERT , -- \; GNN }

\end{axis}

\end{tikzpicture}}
    \caption{ \myTag{\gradbert} is on the Pareto front, being the method that best balances efficiency and effectiveness: Accuracy performance for ogbn-arxiv w.r.t. inference time. 
    $L$ is the input sequence length and
    OOM (out of memory) means that the model encountered GPU failure. }
    \label{fig:abla-inf}
\end{figure}
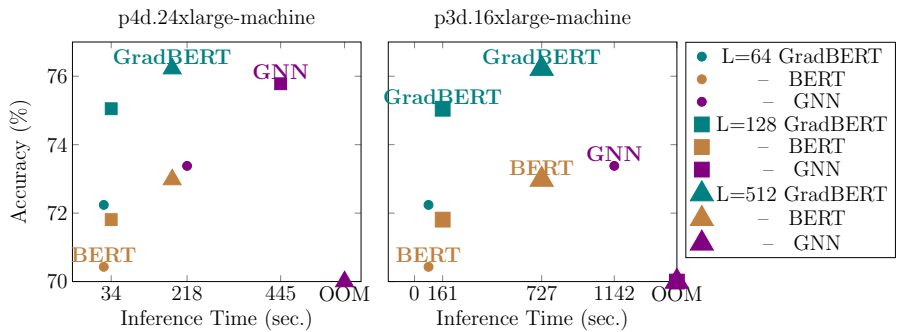

\section{Ablation Studies} \label{app:hyper}

\subsection{Ablation Study on \grad's Loss}

We select \gradj and perform an ablation study on how the distillation term of \Eqref{eq:grad}, which is controlled by hyper-parameter $\lambda$, impacts teacher-student performance. Table~\ref{tab:lamb} verifies that KD loss benefits both the teacher and the student (\grad row). The most significant benefit is observed for the student, which indicates that GNNs distill useful graph-aware information.

\begin{table}
	\centering
	\caption{KD loss effect in \Eqref{eq:grad} for Arxiv dataset.}
	\label{tab:lamb}%
	\resizebox{0.6\columnwidth}{!}{
	\begin{threeparttable}
		\begin{tabular}{lcc}
			\toprule
			 Model & Configuration &  Arxiv accuracy \\ %
			 \midrule
             \multirow{3}{*}{Student} & $\alpha=0, \lambda=0$ & 72.98 \\
			  & $\alpha=0.2, \lambda=0$ & 72.76 \\
			  & \grad & \textbf{74.92} \\
            \midrule
			\multirow{3}{*}{Teacher} & $\alpha=1, \lambda=0$ & 74.78 \\
			  &  $\alpha=0.8, \lambda=0$ & 74.94 \\
			  & \grad & \textbf{75.78} \\
			\bottomrule
		\end{tabular}%
        \end{threeparttable}
        }
\end{table}

\subsection{Hyper-Parameter Sensitivity Analysis}
We continue with more extensive ablation studies for Products dataset (which is the dataset with the largest amount of unlabeled nodes). Table~\ref{tab:lamb2} show the results with $\alpha \in \{0.2, 0.8\}$ and $\lambda \in \{20, 50\}$. In general, due to the excessive amount of unlabeled nodes compared to labeled ones, higher $\lambda$ might overfits to poor GNN predictions. As $\alpha$ controls the importance of each model, $\alpha = 0.2$ leads to a better student, and  $\alpha = 0.8$ leads to a better teacher.

\begin{table}
	\centering
	\caption{KD loss effect in \Eqref{eq:grad} for Arxiv dataset.}
	\label{tab:lamb2}%
	\resizebox{0.6\columnwidth}{!}{
	\begin{threeparttable}
		\begin{tabular}{lcc}
			\toprule
			 Model & Configuration &  Products accuracy \\ %
			 \midrule
             \multirow{4}{*}{Student} & $\alpha=0.2, \lambda=20$ & \textbf{81.42}\\
             & $\alpha=0.8, \lambda=20$ & 80.91  \\
			  & $\alpha=0.2, \lambda=50$ & 81.12 \\
                & $\alpha=0.8, \lambda=50$ & 81.25 \\
            \midrule
			\multirow{4}{*}{Teacher} & $\alpha=0.2, \lambda=20$ & 82.45 \\
             & $\alpha=0.8, \lambda=20$ & \textbf{83.34} \\
			  & $\alpha=0.2, \lambda=50$ & 82.38 \\
                & $\alpha=0.8, \lambda=50$ & 83.12 \\
			\bottomrule
		\end{tabular}%
        \end{threeparttable}
        }
\end{table}

\section{Qualitative Examples} \label{sec:exp-fig}

\begin{figure}[tb]
    \centering
    \includegraphics[width=0.8\linewidth]{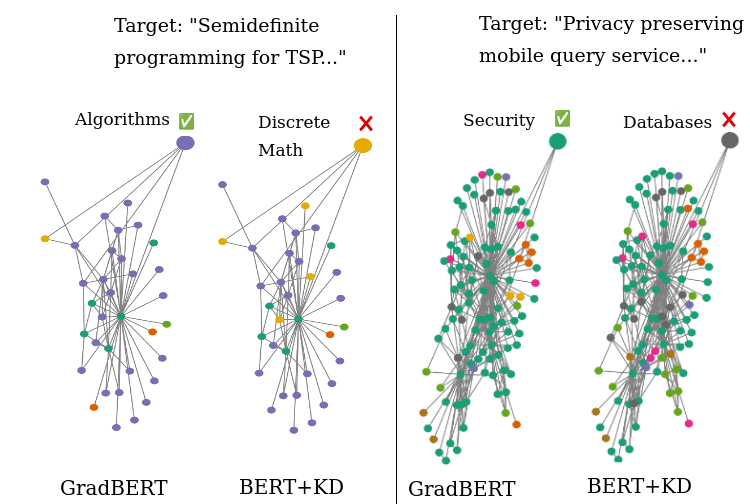}
    \caption{Qualitative examples in which \method outperforms conventional KD. Colors denote node labels. \gradbert leverages label-aware neighbor information to infer the correct label for the target text.}
    \label{fig:abla}
\end{figure}

\Figref{fig:abla} illustrates two  examples from the Arxiv dataset in which \gradbert and BERT+KD have different predictions. In the first example, the phrase ``semidefinite programming" contained in the target text makes BERT+KD incorrectly predict \textit{Discrete Math} as a node label, and in the second, the phrase ``query sevice" makes  BERT+KD incorrectly predict \textit{Databases}. On the other hand, \gradbert better leverages the graph information to decide on what label to predict. In the first example, the majority of neighbors of the target node are \textit{Algorithm} papers and, in the second, \textit{Security} papers. In both cases, \gradbert predicts the correct label.

\section{SOTA GNN Performance} \label{app:gnn-full}

We also assess the quality of \gradbert's learned node representations when used as input \emph{label-aware} node features for other downstream GNNs. Table~\ref{tab:gnn-res-full} shows that \gradbert improves two successful GNNs, RevGAT and SAGN+, by 0.20\%, on average.
Fine-tuning \gradbert performs better than static LMs (OGB-feat) by 3.7\% points, on average. 
For simpler GNNs, i.e., GraphSAGE and its mini-batch version GraphSAINT~\cite{zeng2019graphsaint}, \gradbert performs better than existing approaches by 1\%-1.4\% points. Products dataset has a very small label rate, thus we combine \gradbert with GIANT (GIANT+\gradbert), which is an unsupervised learning method.
\begin{table}[h]
	\centering
	\caption{Performance comparison of GNN methods in transductive settings w.r.t. the input text representations (\textit{LM} column).}
	\label{tab:gnn-res-full}%
	\resizebox{0.7\columnwidth}{!}{
	\begin{threeparttable}
		\begin{tabular}{llll}
			\toprule
			 Dataset & LM & \multicolumn{2}{c}{GNN} \\
			\midrule
			 \multirow{7}{*}{\textbf{Arxiv}}& &  GraphSAGE & RevGAT \\
			 \cline{3-4}
			 & OGB-feat & $71.49_{\pm 0.27}$ & $74.26_{\pm 0.17}$  \\
			 & BERT & $73.42$ &  -\\
		     & GA-BERT & $74.97$ &  - \\
			& GIANT & $74.59_{\pm 0.28}$ & $76.12_{\pm 0.16}$ \\
			& GLEM & $75.50_{\pm 0.24}$ & $76.97_{\pm 0.19}$ \\
			& \gradbert \gradbertemoji{} & $\textbf{76.50}_{\pm 0.07}$& $\textbf{77.21}_{\pm 0.31}$  \\
			\midrule
			\multirow{7}{*}{\textbf{Products}}& &  GraphSAINT & SAGN+ \\
			 \cline{3-4}
			 & OGB-feat& $80.27_{\pm 0.26}$ & $84.85_{\pm 0.10}$ \\
			 & BERT & $81.24$ & - \\
			 & GA-BERT & $82.35$ & - \\
			 & GLEM & $83.16_{\pm 0.19}$ & - \\
		     & GIANT & $84.15_{\pm 0.15}$  & $86.73_{\pm 0.08}$  \\
			 & \; +\gradbert \gradbertemoji{} & $\textbf{85.51}_{\pm 0.51}$ & $\textbf{86.92}_{\pm 0.07}$\\
			\bottomrule
		\end{tabular}%
		\begin{tablenotes}
		\item RevGAT~\cite{li2021revgnn} is a successful GNN for ogbn-arxiv.
            \item SAGN+, i.e., SAGN+SLE+C\&S \cite{sun2021sagn,huang2021cs}, is a successful GNN for ogbn-products.
        \end{tablenotes}
		\end{threeparttable}
}

\end{table}%

\section{Failing KD Case}  \label{app:coro}

\begin{corollary} \label{corollary-app}
Given perfect GNN predictions $\hat{\vt}_v$ obtained via \Eqref{eq:gnn-kd}, the \graphless student can achieve perfect predictions on transductive nodes by setting $X_v := \hat{\vt}_v$ and function $\tau'(\cdot)$ to the identity matrix.
\end{corollary}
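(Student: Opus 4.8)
The statement of Corollary~\ref{corollary-app} is essentially a tautology once one unpacks the definitions, so the ``proof'' is really a short verification. The plan is to argue directly from the definition of the student predictions $\hat{\vs}_v = \mlp(\tau'(X_v))$ together with the premises that (i) the GNN soft-labels $\hat{\vt}_v$ are ``perfect'', i.e. $\hat{\vt}_v = \vy_v$ on all transductive nodes, (ii) we are free to choose the student's input representation, and (iii) $\tau'$ may be taken to be the identity. Substituting $X_v := \hat{\vt}_v$ and $\tau'(\cdot) = \mathrm{Id}$ gives $\tau'(X_v) = \hat{\vt}_v$, and then letting the final $\mlp$ be the identity classifier (or absorbing it as well) yields $\hat{\vs}_v = \hat{\vt}_v = \vy_v$ for every transductive node $v$.

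Concretely, the steps I would carry out are: first, recall from \Secref{sec:kd} that the student head is $\hat{\vs}_v = \mlp(\tau'(X_v))$ and that ``perfect predictions'' means the predicted logits induce the correct one-hot label $\vy_v$. Second, observe that on the transductive node set the soft-labels $\hat{\vt}_v$ produced by \Eqref{eq:gnn-kd} are, by the perfection hypothesis, exactly $\vy_v$ (up to the usual temperature/softmax encoding, which I would note is immaterial since an identity map composed with a softmax still recovers the argmax). Third, make the substitution $X_v := \hat{\vt}_v$ — this is admissible because the corollary is a statement about what the \graphless student \emph{can} represent, i.e. about the expressive capacity of the hypothesis class, not about what is learned from the original texts. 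Fourth, set $\tau'$ to the identity and the $\mlp$ head to the identity (both are in the respective hypothesis classes, or can be made so), so that $\hat{\vs}_v = \tau'(X_v) = \hat{\vt}_v = \vy_v$. Conclude that the \graphless student attains zero classification error on all transductive nodes.

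The ``main obstacle'' is not technical difficulty but framing: the point of the corollary is cautionary, so I would emphasize \emph{why} this is a degenerate/failing case rather than a genuine success. The construction only works because the soft-labels were precomputed for exactly these nodes; nothing has been learned about the text features $X_v$ themselves, so the identity-student has no mechanism to predict labels of inductive nodes $u \notin \gV^{\mathrm{tran}}$ for which no $\hat{\vt}_u$ exists as input. I would therefore close the argument with a remark connecting back to Remark~\ref{remark1} and Corollary~\ref{corollary}: this shows that in the purely transductive regime a KD student can minimize its objective without capturing any of $I(\vy_v; X_v)$, $I(\vy_v; \tilde{\gX}_v \mid X_v)$, or $I(\vy_v; \gE_v \mid \gX_v)$ in a generalizable way, motivating the joint \grad formulation and the inductive evaluation. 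One minor subtlety worth a sentence is the softmax/temperature encoding of logits versus one-hot targets; I would dispatch it by noting that exact recovery of the label only requires matching the $\argmax$, which the identity construction trivially does, or alternatively by taking $\hat{\vt}_v$ to be the logits themselves rather than post-softmax probabilities.
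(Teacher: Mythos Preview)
Your argument is correct, but it takes a different route from the paper's own proof. You read ``perfect'' as $\hat{\vt}_v = \vy_v$ and then verify by direct substitution that $\hat{\vs}_v = \mlp(\tau'(X_v)) = \hat{\vt}_v = \vy_v$ once $X_v := \hat{\vt}_v$ and both $\tau'$ and the $\mlp$ head are set to the identity. The paper instead stays inside the mutual-information framework of \Secref{sec:mi}: it defines ``perfect'' as $\hat{\vt}_v$ maximizing $I(\vy_v;\gG_v)$, plugs $X_v := \hat{\vt}_v$ into the decomposition of \Eqref{eq:mi2}, argues that the conditional terms $I(\vy_v;\tilde{\gX}_v \mid \hat{\vt}_v)$ and $I(\vy_v;\gE_v \mid (\tilde{\gX}_v,\hat{\vt}_v))$ vanish because all label-relevant structure is already encoded in $\hat{\vt}_v$, and concludes $I(\vy_v;\gG_v) = I(\vy_v;\hat{\vt}_v)$, so the identity $\tau'$ already attains the maximum. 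Your proof is shorter and more elementary, and your explicit handling of the $\mlp$ head and the argmax/softmax subtlety is a nice touch the paper omits; the paper's version has the advantage of keeping the statement and proof in the same information-theoretic language as Remark~\ref{remark1} and Corollary~\ref{corollary}, which makes the connection to the failing-case discussion more seamless. Both arguments support the same cautionary conclusion you spell out at the end, and that framing is exactly what the paper uses the corollary for.
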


\begin{proof}
    The proof is straightforward. Perfect GNN predictions $\hat{\vt}_v$ mean that the predictions obtained by
    \begin{align}
    \begin{split}
    \hat{\vt}_v &= f(\tau(X_u): u \in \gG_v), \\
       \textrm{where } & f, \tau = \argmax_{\tilde{f}, \tilde{\tau}} \sum_{v \in \gV^L} I_{\tilde{f}, \tilde{\tau}}(\vy_v; \gG_v),
       \end{split}
    \end{align}
    maximize the mutual information over the unlabeled nodes
    \begin{align}
     \hat{\vt}_v = \argmax_{\tilde{\vt}_v} \sum_{v \in \gV^U}I(\vy_v, \gG_v) \label{eq:app-proof1}.
    \end{align}

With $X_v := \hat{\vt}_v$, \Eqref{eq:mi2} becomes
    \begin{equation}
         I(\vy_v; \gG_v) =
    I(\vy_v; \hat{\vt}_v) + I(\vy_v;\tilde{\gX}_v | \hat{\vt}_v)  + I(\vy_v ; \gE_v| \gX_v),
    \end{equation}
    which can be further written as
    \begin{equation}
         I(\vy_v; \gG_v) =
    I(\vy_v; \hat{\vt}_v) + I(\vy_v;\tilde{\gX}_v | \hat{\vt}_v)  + I(\vy_v ; \gE_v| (\tilde{\gX}_v, \hat{\vt}_v)),
    \end{equation}
    Because all structural label-aware information is encoded into $\hat{\vt}_v$, $\gE_v$ and $\tilde{\gX}_v$ do not provide additional information for $\vy_v$. Thus, $\vy_v$ is conditionally independent given $\hat{\vt}_v$, which gives $I(\vy_v;\tilde{\gX}_v | \hat{\vt}_v)=0$ and $I(\vy_v ; \gE_v| (\tilde{\gX}_v, \hat{\vt}_v))=0$, due to the mutual information definition. As a result, we have 
    \begin{equation}
         I(\vy_v; \gG_v) = I(\vy_v; \hat{\vt}_v).
    \end{equation}
    The \graphless student solves
    \begin{equation}
        \max_{\tau} I_{\tau}(\vy_v; \tau(\hat{\vt}_v)).
    \end{equation}
    By setting $\tau$ to the identity matrix we have
    \begin{equation}
        I(\vy_v; \hat{\vt}_v),
    \end{equation}
    which is a solution to 
    \begin{equation}
        \max I(\vy_v, \gG_v)
    \end{equation}
    as shown in \Eqref{eq:app-proof1}. Thus, the \graphless student achieves perfect predictions for nodes that $ \hat{\vt}_v$ is given without the need to optimize $ I(\vy_v;  \hat{\vt}_v)$.
\end{proof}
This is, however, a failing case as, for inductive nodes, GNN predictions are not given and the \graphless will only be able to perform random guesses ($\tau$ is an identity matrix without any learned parameters).

\section{Product-Reviews Dataset} \label{app:prod-rev}

One major advantage of \gradbert over simply fine-tuning BERT is that it leverages unlabeled nodes. However, in current datasets, these nodes are directly associated with the hidden labels. On the other hand, Product-Reviews dataset is a heterogeneous graph (Product-Reviews), where there are two different node types; product descriptions and product reviews.  Product-nodes have review-nodes as their side information (product-nodes are not directly connected with each other). In this setting, we test whether \gradbert can leverage this side information (review-nodes) to better predict labels on the target node type (product-nodes). We perform inductive learning, so that BERT and \gradbert use the same product-nodes during training. As Table~\ref{tab:prodrev} shows, \gradbert is able to improve BERT by classifying correctly 1,968-7,926 more products, depending on the data split. This is a considerable improvement although each product has poor graph information with an average degree of 0.7; see Table~\ref{tab:data}.

\begin{table}[h]
	\centering
	\caption{Improvement of \gradbert over BERT for Product-Reviews.  \gradbert leverages the additional side graph information to improve performance on product nodes. }
	\label{tab:prodrev}%
	\resizebox{0.7\columnwidth}{!}{
	\begin{threeparttable}
		\begin{tabular}{lll|c}
			\toprule
			  Split (\%) &   Method & Side Info & Improvement \\ %
			 \midrule
			 \multirow{2}{*}{30/10/60}& BERT &   None  & -- \\ %
			 & \gradbert &   Reviews & \#7,926 Products \\
			  \cline{1-4}
			 \multirow{2}{*}{80/10/10}& BERT &  None  & -- \\ %
		& \gradbert &   Reviews & \#1,968 Products \\
			\bottomrule
		\end{tabular}%
		\end{threeparttable}
}

\end{table}%

\section{Perturbed Node Texts} \label{app-pert}

We study the impact of jointly learning the GNN teacher and \graphless student in a case where the graph structure is important while learning $\tau(\cdot)$. Following our insights from \Secref{sec:mi} that KD may simply copy GNN predictions, we develop \emph{perturbed} datasets in which node texts cannot be easily be mapped to their soft-labels.  Specifically for each node of ogbn-arxiv and ogbn-products graphs, we keep half of its text the same as its original text, but replace the other half by some  noisy text, that is sampled from a random node. For methods that do not use the graph during inference, it is very challenging to discriminate whether useful information comes from the original or and the noisy text parts. For methods that use structure, i.e., GNNs, the majority of the node's neighbors will contain similar semantics with the original text, making the task much easier.

Table~\ref{tab:pert} shows \gradbert's performance with different $\alpha$ values, which control the importance of the GNN term in ~\Eqref{eq:grad}. As we increase $\alpha$, \gradbert performs better, which suggests that \gradbert learns more useful structural information. On the other hand, BERT+KD performs poorly on this task, as it does not use any graph structure while learning $\tau(\cdot)$. We hypothesize this happens because BERT+KD cannot learn an accurate mapping  $\tau(X_v) \xrightarrow{} \hat{\vt}_v$ from text to labels, as analyzed in Section~\ref{sec:mi}.

\begin{table}[h]
	\centering
	\caption{Performance comparison of \gradbert for perturbed datasets w.r.t. different $\alpha$ values (graph importance in \Eqref{eq:grad}).}
	\label{tab:pert}%
	\resizebox{0.7\columnwidth}{!}{
	\begin{threeparttable}
		\begin{tabular}{lcc}
			\toprule
			  &  Arxiv-Perturbed &  Products-Perturbed \\ %
			  & Val / Test & Val / Test \\
			 \midrule
             \method & &  \\
			 \; with $\alpha = 0.8$ & \textbf{49.04} / \textbf{47.35} & \textbf{87.06} / \textbf{63.21} \\
			 \; with $\alpha = 0.2$ & 47.32 / 45.11 & 83.45 / 61.14 \\
            \midrule
			 BERT+KD  & 41.75 / 38.66 & 71.61 / 62.63 \\
			\bottomrule
		\end{tabular}%
		\begin{tablenotes}
		\item BERT+KD is equivalent to setting $\alpha \to 0$.
        \end{tablenotes}
		\end{threeparttable}
}

\end{table}%

\end{document}